\documentclass[11pt]{article}

\usepackage[T1]{fontenc}
\usepackage{amsmath,amssymb,amsthm}
\usepackage{booktabs}
\usepackage[a4paper,margin=1in]{geometry}
\usepackage[hidelinks]{hyperref}

\newtheorem{proposition}{Proposition}

\begin{document}

\begin{titlepage}
\thispagestyle{empty}
\begin{center}
{\LARGE\bfseries Cognitive Schemas, Laws and Tasks\par}
\vspace{1.5em}
{\large Antal Jakov\'ac\textsuperscript{1}\quad Andr\'as Telcs\textsuperscript{1,*}\par}
\vspace{1.5em}
\begin{minipage}{0.92\textwidth}
\small
\textsuperscript{1}HUN-REN Wigner Research Centre for Physics, Institute for
Particle and Nuclear Physics, Computational Sciences Department, Data and
Compute Intensive Sciences Research Group, Budapest, Hungary.


\vspace{1.0em}
E-mail: \href{mailto:jakovac.antal@wigner.hun-ren.hu}{jakovac.antal@wigner.hun-ren.hu}

\vspace{0.6em}
\textsuperscript{*}\textbf{Corresponding author:} Andr\'as Telcs,\newline
\href{mailto:telcs.andras@wigner.hun-ren.hu}{telcs.andras@wigner.hun-ren.hu}

\vspace{0.6em}
ORCID (Andr\'as Telcs): 0000-0002-3205-3081
\end{minipage}
\end{center}
\end{titlepage}

\begin{abstract}
This paper asks how explicit representations can support reusable cognitive
schemas in knowledge-based problem solving. We develop a structural framework
in which schemas are organized by the information and relations required for
their use, rather than introduced as unrelated primitives. The framework also
distinguishes context-dependent relations from more stable structures that can
be reused across different representations.

Tasks are described through the information available, the unknowns to be
determined, and the constraints that admissible solutions must satisfy. This
makes it possible to separate limitations of the representation from
limitations of the solving procedure. In particular, we distinguish
inconsistency, underdetermination, and contextual insufficiency, where the
current representation lacks distinctions or relations required by the
external task meaning. We also show formally when a reduction of representation
preserves the task-relevant solution structure.

The resulting task--schema interface offers a structured way to describe
representational conditions relevant to problem solving. It supports the reuse
and stabilization of derived knowledge while remaining independent of the
particular mechanism used to generate candidate solutions. This may provide a
useful component for future solver architectures that combine structured
knowledge, verification, and learned proposal mechanisms.
\end{abstract}

\noindent\textbf{Keywords:} cognitive schemas, structural representation,
Knowledge Space, context, task formulation, problem solving

\vspace{1em}

\section{Introduction}

Knowledge representation is a central component of intelligent problem solving: the form in which a task and its objects are represented determines which relations, transformations, and solution procedures are available. This paper develops a structural account of cognitive schemas within the representational framework introduced in \cite{jakovac2025representation} and further discussed in \cite{jakovac2026struct}. That framework provides finite contexts, invariance relations, representative objects, and a Knowledge Space as a substrate for problem solving. Here we ask which higher-level schemas can be derived from this substrate without introducing additional representational primitives.

The main idea is that notions such as equivalence, order, successor, conditional dependence, causal dependence, discrete succession, number-like structure, implication, and logical operations can be represented as derived structural patterns. They are not introduced as independent cognitive faculties or algorithmic modules. Instead, their applicability follows from relations among contexts, representational objects, and admissible transformations.

The central contribution is a dependency-structured hierarchy of such schemas together with a formal task--schema interface. The hierarchy identifies the representational conditions required for progressively richer relational and process structures. A task is represented by a context, given elements, unknown placeholders, and constraints. These jointly determine the admissible instantiations of the unknowns and the schemas that can legitimately be applied. Problem solving can therefore be viewed as constrained instantiation under task-compatible representational structure.

This formulation also makes it possible to distinguish failures of representation from failures of a solving procedure. We identify three basic forms of ill-posedness: inconsistency, underdetermination, and contextual insufficiency. In each case, the task specification fails to provide the representational conditions required for a unique admissible solution.

The contribution lies in the organization of these notions within a common structural framework: contexts are task-relative representational structures, schemas specify conditions of structural applicability, laws capture relations stable across relevant contexts, and task formulation determines which schemas may participate in problem solving. The paper develops this representational layer through the schema hierarchy, the task--schema interface, and explicit conditions for task ill-posedness.

\section{Related Work}

The closest predecessors of the present framework do not form a single
research tradition. Schema theory and cognitive architectures begin from
organized knowledge and action; rough sets and Formal Concept Analysis from
formal distinctions among objects; analogy and program induction from reusable
relational structure; and recent work on language models and knowledge graphs
from the interaction between linguistic competence and explicit knowledge.
These lines address different parts of the representational problem considered
here.

Classical schema theory treats schemas as organized knowledge structures that
guide memory, interpretation, and action
\cite{Bartlett1932,Rumelhart1980}. Related developments in artificial
intelligence introduced frames, scripts, plans, and goals as structured
representations for understanding and problem solving
\cite{Minsky1975,SchankAbelson1977}. Cognitive architectures and
production-system approaches, including ACT-R, Soar, and problem-space
formulations, similarly organize cognition around structured states, operators,
goals, productions, and memory
\cite{NewellSimon1972,AndersonEtAl2004,LairdNewellRosenbloom1987}.
The present work isolates one component of this broader architectural problem: the representational conditions under which particular schemas become applicable to a task.

Several formal representation frameworks are especially close to the present
setting. Rough set theory uses indiscernibility relations and partitions to
study approximation, reducts, and decision rules
\cite{Pawlak1982,Pawlak1991,SkowronRauszer1992}. Formal Concept Analysis
constructs extent--intent concepts and concept lattices from formal contexts
\cite{Wille1982,GanterWille1999}; pattern structures extend this construction
to richer descriptions \cite{GanterKuznetsov2001}, while granular computing
studies granules as units of abstraction and problem solving \cite{Yao2008}.
Conceptual graphs and related knowledge-representation formalisms provide
graph-based representations of concepts and relations
\cite{Sowa1984,BaaderEtAl2003}. 

In the present framework, contexts are task-relative representational partitions, while the Concept Graph serves as a host for concepts, schemas, and stabilized laws. This gives these structures a different role from an FCA concept lattice, a conceptual graph in Sowa's sense, or a domain ontology.

Relational structure is treated in several distinct ways in theories of analogy and abstraction. Structure-mapping theory locates analogical correspondence primarily in relations among objects rather than in matches between isolated attributes \cite{Gentner1983}. Constraint-based accounts add semantic and pragmatic constraints to the structural requirements of a mapping \cite{HolyoakThagard1989}. Conceptual spaces organize concepts through geometric quality dimensions and similarity \cite{Gardenfors2000}, whereas compositional approaches study the preservation of structure under more abstract transformations \cite{PhillipsWilson2010}. These approaches therefore provide different formal views of comparison, similarity, and structural organization. A separate computational question is how reusable structures are learned and combined across tasks; this is addressed more directly in program induction and rule learning.

Program induction and rule learning make the dependence of rule discovery on an available representational language particularly clear. Inductive logic programming induces logical hypotheses from examples and background knowledge \cite{Muggleton1991,CropperEtAl2021}, whereas DreamCoder and related approaches learn reusable abstractions that can be composed across tasks \cite{ellis2023dreamcoder,lake2017building,andreas2017modular}. In each case, the space of candidate rules is delimited by the available primitives and admissible forms of composition. The corresponding representational issue is therefore concrete: a schema or relation can be applied only when the task context supplies the distinctions and relational structure required to express it.

Modern machine learning locates relational structure and invariance at several distinct levels. Relational inductive biases and graph-based models organize computation around relations among represented objects \cite{battaglia2018relational,xu2019how,sun2019rotatE}. Group-equivariant architectures incorporate specified transformation symmetries into the model architecture \cite{cohen2016group}, while invariant risk minimization treats stability across environments as a learning criterion \cite{arjovsky2019invariant}. These examples show that relational structure and invariance may enter through the representation, the architecture, or the learning objective. In the present framework, invariance recognition and representative selection operate on task-relative contexts: they determine which distinctions are preserved under admissible transformations and which may be suppressed. This places the issue at the level of task representation rather than the choice of a particular model class or loss function.

Task specification provides another point of contact. In formal methods and
program synthesis, admissible solutions are defined relative to explicit
specifications and constraints before search is performed
\cite{gulwani2017program,solar2008specification}. Causal models likewise make
variables, structural relations, and admissible interventions explicit
\cite{pearl2009causality}. Work on benchmark design and generalization has also
shown that apparent learning failures may reflect mismatches between data,
representation, and evaluation criteria
\cite{geirhos2020shortcut,schmidt2021characterizing}. The framework likewise makes task structure explicit, while also treating the representational context as part of the task specification. This makes it possible to distinguish inconsistency, underdetermination, and contextual insufficiency from failures of a particular search, learning, or inference procedure.

Recent work on integrating large language models with explicit knowledge
structures is also relevant to the broader motivation of the framework. A
recent TIST special issue identifies three complementary directions: examining
the limits of structured knowledge implicitly available in LLMs, using LLMs to
construct or enrich explicit knowledge structures, and using such structures
to support more grounded retrieval and reasoning
\cite{He2026LLMKG}. Experiments on public ontologies show that
general-purpose pretrained LLMs reproduce exact identifier--label associations
only incompletely, with successful recall strongly associated with concept
frequency on the Web \cite{Bombieri2026Ontologies}. These results caution
against identifying the implicit representation of a pretrained language model
with an explicit and reliably accessible knowledge structure.

Conversely, LLMs can assist the construction and extension of explicit
structures. GPT-assisted entity typing can enrich entity, type-cluster, and
relation representations in an existing knowledge graph
\cite{Zhang2026GMGCL}, while FLAME uses an LLM to propose the placement of new
concepts in an existing expert-curated taxonomy
\cite{Mishra2026FLAME}. Language models can also provide an interface between
external task descriptions and formal structured representations. Meloni
et al.\ study the translation of natural-language scientific questions into
executable SPARQL queries over knowledge graphs and show that explicit entity
and relation information can substantially assist this translation
\cite{Meloni2026SPARQL}. These approaches illustrate different ways in which
the broad linguistic competence of pretrained models can complement explicit
structured knowledge.

Of particular relevance to future solver architectures, Zhang et al.
combine an LLM with an explicit knowledge graph, meta-knowledge, short- and
long-term memory, and a self-correcting mechanism
\cite{Zhang2026MetaKnowledge}. Experience acquired during interaction is
retained and reused across subsequent tasks, providing incremental adaptation
outside the fixed pretrained model. This direction is close in motivation to
the separation considered here between problem-solving mechanisms and an
explicit persistent knowledge structure. The Knowledge Space proposed in our
framework is nevertheless different from these knowledge graphs and
meta-knowledge stores. Its Concept Graph is not primarily a repository of
externally specified entities and factual triples, and its persistent
structures are not limited to strategies for querying an existing graph.
Instead, the Knowledge Space is intended to contain task-relative concepts,
schemas, and stabilized laws that can be constructed and reused across
contexts. The present paper provides the representational layer on which such interpretation, proposal, verification, and controlled knowledge-extension mechanisms can be organized.

Overall, existing work provides rich accounts of structured knowledge,
relations, abstraction, invariance, and formal task specification. The
distinctive feature of the present framework is their organization around an
explicit task-relative context: contexts determine available distinctions,
schemas specify structural applicability, laws capture relational structure
stable across admissible context transformations, and task formulation
determines which of these structures can participate in problem solving.

 \section{Cognitive Schemas and Structural Derivation}

\subsection*{Schemas, Application Frames, and Applicability}

We use cognitive schema as the general term for reusable structural mappings. A cognitive primitive is itself a schema. Primitive schemas belong to the initial cognitive basis, whereas derived schemas use structures made available by other schemas or by previously stabilized conceptual objects. The schemas developed below are therefore presented in a dependency-compatible order: the prerequisites of a derived schema are introduced before the schema that uses them. The dependency structure need not form a single linear chain.

Throughout the paper, $C\subseteq\Omega$ denotes the underlying class or  base domain of a context, $\mathcal C$ denotes a finite context on $C$, and $c\in\mathcal C$ denotes a context cell. When a cell is represented by a  distinguished object, we may use that representative in place of the cell when no ambiguity results. 

A schema $S$ is represented by

\[
S=(I_S,O_S,P_S,\Phi_S),
\]

where $I_S$ specifies its typed input placeholders, $O_S$ its typed output placeholders, $P_S$ its structural prerequisites, and $\Phi_S$ the structural mapping associated with the schema.

The input and output placeholders have the form

\[
I_S=
(x_1:\tau_1,\ldots,x_m:\tau_m),
\]

and

\[
O_S=
(y_1:\sigma_1,\ldots,y_n:\sigma_n),
\]

where $\tau_i$ and $\sigma_j$ are the required types. The types specify which kinds of objects may occupy the corresponding placeholders. They may refer, for example, to contexts, context cells, concepts, relations, laws, schemas, or other structured objects available in the Knowledge Space. No fixed exhaustive collection of types is required here.

The mapping $\Phi_S$ is structural rather than algorithmic. It specifies which output structure becomes available when the input placeholders are instantiated by compatible objects and the prerequisites $P_S$ are satisfied.

A schema is generally not applied to a context alone. Its application also depends on the objects currently available, the outputs sought, and the relevant local constraints. We represent this information by a schema application frame

\[
\mathcal A=(\mathcal C,X,Y,\Gamma_{\mathcal A}),
\]

where $\mathcal C$ is the active context, $X$ is the collection of available objects and structures, $Y$ is the collection of output placeholders available to the application, and $\Gamma_{\mathcal A}$ contains the constraints relevant to that application. Elements of $Y$ may correspond to unknowns of the task or to intermediate objects introduced during problem solving.

An application frame contains more information than the context but need not contain the complete task. For a task

\[
\mathcal T=(\mathcal C,G,U,\Gamma),
\]

an application frame selects the part of the task and of the currently available derived structure that is relevant to one prospective schema application. Different schemas may therefore be applied to different application frames within the same task.

An instantiation of $S$ in $\mathcal{A}$ assigns available objects to the input placeholders of $S$ and associates its output placeholders with compatible elements of $Y$. We write

\[
\operatorname{App}(S,\mathcal{A},\iota)
\]

when an instantiation $\iota$ satisfies the following conditions:

\begin{enumerate}
    \item every input placeholder $x_i:\tau_i$ is assigned an object compatible with the required type $\tau_i$;
    \item every output placeholder $y_j:\sigma_j$ is associated with an output of compatible type $\sigma_j$;
    \item the structural prerequisites $P_S$ are satisfied by the instantiated objects in the context $\mathcal C$;
    \item the instantiation is compatible with the local constraints $\Gamma_{\mathcal{A}}$.
\end{enumerate}

The schema $S$ is applicable to $\mathcal{A}$ if at least one such instantiation exists:

\[
\operatorname{App}(S,\mathcal{A})
\Longleftrightarrow
\exists\iota\;
\operatorname{App}(S,\mathcal{A},\iota).
\]

For an admissible instantiation, $\Phi_S$ determines the corresponding admissible output structure or structures. Applicability therefore depends jointly on type compatibility, the active context, the available objects, and the local constraints.

This formulation also gives a precise meaning to dependency between schemas.
We write

\[
S_1\triangleleft S_2
\]

if a structure produced by an admissible application of $S_1$, or the
stabilized schema concept corresponding to $S_1$, can satisfy an input
placeholder or a structural prerequisite of $S_2$. The relation
$\triangleleft$ denotes direct structural dependency.

Dependency does not by itself imply applicability. If

\[
S_1\triangleleft S_2,
\]

an application of $S_1$ may provide one of the structures required by
$S_2$, but $S_2$ becomes applicable only when its remaining typed inputs,
structural prerequisites, and local constraints are also satisfied in the
corresponding application frame. In this sense, derivation denotes typed
structural construction rather than logical implication between schema
names.

For example,

\[
S_{\mathrm{order}}
\triangleleft
S_{\mathrm{succ}}
\]

expresses that an ordered structure can satisfy a prerequisite of the
successor schema. Once the remaining applicability conditions are met,
the successor schema can use this structure to construct the corresponding
minimal-progression relation.

More generally, a derived schema is a schema whose admissible application
uses structural objects made available by preceding schema applications or
by their stabilized conceptual representations, rather than relying only on
the initial cognitive basis.

Schemas may themselves become objects of representation. When a schema $S$ is sufficiently stable for reuse across a family of applications, it may be lifted into the Concept Graph as a schema concept. We denote this by

\[
\operatorname{Lift}(S)=c_S.
\]

The schema $S$ and the concept $c_S$ have distinct functions. The schema is the structural mapping that can be instantiated and applied, whereas $c_S$ represents that schema as a conceptual object. The lifted object may therefore be referred to, compared, generalized, specialized, or used as an input to another schema.

The same distinction applies when a schema refers to an ordinary concept. A concept node represents a stable conceptual object, whereas a particular schema application may require a context-relative realization of that concept. Let

\[
\operatorname{Inst}_{\mathcal{A}}(c)
\]

denote the admissible realizations of a concept $c$ in the application frame $\mathcal{A}$. A concept may have one or more associated instantiation schemas. If $S_c^{\mathrm{inst}}$ is such a schema, an admissible application has the form

\[
S_c^{\mathrm{inst}}(c,\mathcal{A})=x_{\mathcal{A}},
\]

with

\[
x_{\mathcal{A}}
\in
\operatorname{Inst}_{\mathcal{A}}(c).
\]

Thus, when an input placeholder of a schema requires a context-relative object but is associated with a concept $c$, the corresponding instantiation schema supplies an admissible realization before the main schema is applied. Schematically,

\[
c
\longrightarrow
x_{\mathcal{A}}
\longrightarrow
\Phi_S(x_{\mathcal{A}},\ldots).
\]

This is different from a meta-level application. If an input placeholder explicitly requires a concept or a schema concept, the conceptual object itself may be used as the input. A higher-level schema may therefore operate directly on previously stabilized schemas. For example, if $S_1$ and $S_2$ have been lifted,

\[
\operatorname{Lift}(S_1)=c_{S_1},
\qquad
\operatorname{Lift}(S_2)=c_{S_2},
\]

a higher-level schema $H$ may use these schema concepts as inputs and produce a new schema $S_3$:

\[
(c_{S_1},c_{S_2})
\xrightarrow{H}
S_3.
\]

If $S_3$ subsequently becomes stable for reuse, it may itself be lifted:

\[
S_3
\xrightarrow{\operatorname{Lift}}
c_{S_3}.
\]

Consequently, schema construction is not restricted to a fixed hierarchy. Primitive schemas can support derived schemas; stabilized schemas can become conceptual objects; and these schema concepts can participate in the construction of further schemas. This provides a structural basis for an evolving Knowledge Space while preserving the distinction between a schema, its conceptual representation, its context-relative instantiations, and its executable realizations.

In the present paper, the role of the Knowledge Space is deliberately limited.
Its graph structure provides a persistent organization in which stabilized
schemas and laws can be represented as conceptual objects and made available
for later schema applications. The constructions developed here use this
organization through lifting, instantiation, and reuse of such objects; they do
not depend on a particular graph-search, retrieval, or navigation mechanism.
Those mechanisms belong to the solver level of the framework.

Thus the Knowledge Space is used here as the representational substrate that
allows a derived structure to change status: a schema first appears as an
applicable structural mapping, may become stabilized as a schema concept, and
may then participate as an input to the construction of further schemas. In
schematic form,

\[
S
\longrightarrow
c_S
\longrightarrow
S'
\]

denotes this transition from constructed structure to reusable conceptual
object and subsequently to a component of further structural construction.

\subsection*{Equivalence}

Let $\mathcal{C}$ be a context on a base set $C \subseteq \Omega$, i.e.\ a
partition of $C$ into disjoint cells.
Once $\mathcal{C}$ is fixed, elements $\omega,\omega' \in C$ are considered
\emph{equivalent} if they belong to the same context cell:
\[
\omega \sim_{\mathcal{C}} \omega' \quad \Longleftrightarrow \quad
\exists c \in \mathcal{C} \text{ such that } \omega,\omega' \in c .
\]
Equivalence is therefore induced by the representational granularity encoded in
the context.
Identity corresponds to the degenerate case in which all cells are singletons.
Refining or coarsening a context modifies the induced equivalence relation.

\subsection*{Order and Comparison}

An order schema introduces a strict asymmetric relation $\prec$ defined over context cells or their representatives. Formally,

\[
\prec\;\subseteq\;\mathcal C\times\mathcal C
\]

is a strict partial or strict total order. Order relations operate on equivalence classes rather than on raw states and enable comparison, precedence, containment, or hierarchy without requiring numerical structure.

\subsection*{Successor}

Given an order $\prec$, the successor schema refines it by identifying minimal progression. For $a,b\in\mathcal C$, define

\[
\operatorname{succ}_{\prec}(a,b)
\]

if

\[
a\prec b
\]

and there is no $c\in\mathcal C$ such that

\[
a\prec c\prec b.
\]

The successor relation is defined abstractly and applies to any ordered structure. Temporal order provides a canonical instance, but no commitment to physical or metric time is assumed.

\paragraph{Time and successor structure.}
Successor structure need not be temporal. When a temporal coordinate is available, the successor relation may be interpreted as temporal precedence and thereby supports before--after reasoning. However, the framework does not assume metric, physical, or global time. The same structural role may be played by any ordered progression admitting a successor relation, such as process states, refinement chains, or stages of representation.

\subsection*{Structural Dependence}

Structural dependence describes how the admissible realization of one
component of an application frame constrains the admissible realizations of
another. It is a structural notion defined from the available objects and
constraints. It is distinct from truth-functional implication and does not
assume a probabilistic interpretation.

Let

\[
\mathcal A=(\mathcal C,X,Y,\Gamma_{\mathcal A})
\]

be an application frame. Let $\mathfrak I_{\mathcal A}$ denote the set of
admissible instantiations of this frame, that is, the instantiations that
satisfy the relevant type requirements and the constraints
$\Gamma_{\mathcal A}$.

Let $A$ and $B$ be two typed components whose realizations are specified by
an instantiation $\iota\in\mathfrak I_{\mathcal A}$. For an admissible value
$a$ of $A$, define the admissible realizations of $B$ conditional on
$A=a$ by

\[
\operatorname{Adm}_{\mathcal A}(B\mid A=a)
=
\lbrace
b :
\exists\iota\in\mathfrak I_{\mathcal A}
\text{ such that }
\iota(A)=a
\text{ and }
\iota(B)=b
\rbrace.
\]

We say that $B$ is structurally dependent on $A$ in the application frame
$\mathcal A$, and write

\[
A\leadsto_{\mathcal A}B,
\]

if there exist two admissible realizations $a_1$ and $a_2$ of $A$ such that

\[
\operatorname{Adm}_{\mathcal A}(B\mid A=a_1)
\neq
\operatorname{Adm}_{\mathcal A}(B\mid A=a_2),
\]

with both sets nonempty.

Thus

\[
A\leadsto_{\mathcal A}B
\]

means that changing the admissible realization of $A$ changes the admissible
possibilities for $B$ under the constraints represented in
$\mathcal A$. The dependence is therefore relative to the application
frame: changing the context, the available structures, or the relevant
constraints may change whether the dependence holds.

Structural dependence is directional in the sense that the admissible
possibilities for $B$ are compared under different realizations of $A$.
The relation need not, however, be asymmetric. It is possible that

\[
A\leadsto_{\mathcal A}B
\]

and

\[
B\leadsto_{\mathcal A}A
\]

both hold when the constraints couple the two components in both
directions. Structural dependence alone therefore does not specify causal
direction.

At the schema level, the construction may be summarized as

\[
S_{\mathrm{dep}}:
(\text{application frame},\text{typed components},\text{constraints})
\longrightarrow
\text{structural dependence}.
\]

The resulting dependence relation may subsequently serve as an input or
structural prerequisite of higher-level schemas. In particular, when
successor structure is also available, structural dependence can be refined
by an explicit precedence requirement. This gives the restricted causal
schema developed in the next subsection.
\subsection*{General and Conditional Implication}

Implication is defined separately from structural dependence. Let
$\mathcal C$ be a context on $C\subseteq\Omega$, and let $A$ and $B$ be
predicates on $C$ compatible with the distinctions represented by
$\mathcal C$.

A general implication holds over the complete domain of the context:

\[
A\Rightarrow_{\mathcal C}B
\]

if

\[
\forall \omega\in C,\qquad
A(\omega)\Longrightarrow B(\omega).
\]

A conditional implication holds only on a specified subset $D\subseteq C$.
We write

\[
A\Rightarrow_D B
\]

if

\[
\forall \omega\in D,\qquad
A(\omega)\Longrightarrow B(\omega).
\]

Thus structural dependence,

\[
A\leadsto_{\mathcal A}B,
\]

and implication,

\[
A\Rightarrow_D B,
\]

are different schemas. Structural dependence concerns how admissible
realizations vary under the constraints of an application frame, whereas
implication concerns preservation of truth over a specified domain.

\subsection*{Structural Causal Precedence}

We introduce here one restricted causal schema constructed from two
previously available structures: structural dependence and successor. It
captures a class of causal relations in which a directed dependence is
compatible with an explicit predecessor structure. The construction is not
intended as an exhaustive definition of causality; rather, it provides one
causal schema that can be derived from the structural elements available at
this level of the framework.

Let

\[
\mathcal A=(\mathcal C,X,Y,\Gamma_{\mathcal A})
\]

be an application frame. Let $A$ and $B$ be Boolean-valued typed components represented in $\mathcal A$ whose values are well defined on the cells of $\mathcal C$. They therefore induce predicates

\[
A,B:\mathcal C\to\lbrace\top,\bot\rbrace.
\]

Assume further that

\[
A\leadsto_{\mathcal A}B
\]

holds in the sense of structural dependence defined above, and that $\mathcal C$ is equipped with a successor relation $\operatorname{succ}$.

We write

\[
c\prec_{\operatorname{succ}}^{+}d
\]

when $d$ is reachable from $c$ by one or more successor steps. Thus there
exist context cells

\[
c=c_0,c_1,\ldots,c_k=d,
\qquad
k\geq 1,
\]

such that

\[
\operatorname{succ}(c_i,c_{i+1})
\]

for every $i<k$.

For a predicate or event-like pattern $A$, define its occurrence set by

\[
\operatorname{Occ}_{\mathcal C}(A)
=
\lbrace
c\in\mathcal C : A(c)=\top
\rbrace.
\]

We say that $A$ has structural causal precedence over $B$ in
$\mathcal A$, and write

\[
\operatorname{CP}_{\mathcal A}(A,B),
\]

when:

\begin{enumerate}
    \item $B$ occurs in the represented configuration,
    \[
    \operatorname{Occ}_{\mathcal C}(B)\neq\varnothing;
    \]

    \item every occurrence of $B$ has a preceding occurrence of $A$,
    \[
    \forall d\in\operatorname{Occ}_{\mathcal C}(B),
    \qquad
    \exists c\in\operatorname{Occ}_{\mathcal C}(A)
    \text{ such that }
    c\prec_{\operatorname{succ}}^{+}d.
    \]
\end{enumerate}

The first condition supplies directed structural dependence, whereas the second constrains that dependence by successor structure. The
resulting relation is directional because predecessor and successor
positions are distinguished by the context.

At the schema level, the construction has the form

\[
\text{Structural Dependence}
+
\text{Successor}
\longrightarrow
\text{Structural Causal Precedence}.
\]

Equivalently, in terms of schema dependency,

\[
S_{\mathrm{dep}}
\triangleleft
S_{\mathrm{cp}},
\qquad
S_{\mathrm{succ}}
\triangleleft
S_{\mathrm{cp}}.
\]

The present construction therefore defines one restricted causal relation:
a structurally dependent pattern must precede the pattern that depends on
it along the available successor structure. It does not identify all causal
relations with precedence. Rather, the framework permits different causal
schemas to be constructed when additional representational structure is
available. Later extensions may, for example, introduce interventional or
counterfactual causal schemas \cite{pearl2009causality}, or causal relations defined
for deterministic or stochastic processes.
\subsection*{Discrete Succession}

Discrete succession is obtained by repeated application of the successor relation. A finite succession is a sequence of context cells

\[
(c_0,c_1,\ldots,c_k)
\]

such that

\[
\operatorname{succ}(c_i,c_{i+1})
\]

for every $i<k$. Longer successions are obtained by repeated admissible successor extension.

No numerical magnitude is assumed at this stage; only repeated successor structure is required.

\subsection*{Number-Like Structure}

The sequence from order to number-like structure provides a simple example of how derived schemas use the outputs of their predecessors.

Let $\mathcal{C}$ be a context in which an admissible order relation $\prec$ is available. The successor schema can then be applied to the ordered structure,

\[
(\mathcal{C},\prec)
\longmapsto
\operatorname{succ}_{\prec},
\]

where $\operatorname{succ}_{\prec}$ identifies minimal progression with respect to $\prec$.

The resulting successor relation satisfies an input requirement of the discrete-succession schema. Given an initial element $c_0$, this schema produces finite successor chains

\[
(c_0,c_1,\ldots,c_k),
\]

with

\[
\operatorname{succ}_{\prec}(c_i,c_{i+1})
\]

for every $i<k$.

At the next level, the particular elements occurring in such a chain can be suppressed while preserving its finite successor structure. Successions with the same successor structure are treated as equivalent, and the resulting equivalence classes provide number-like objects. Numerical sameness is thereby associated with sameness of finite succession structure rather than with the identity of the elements realizing the succession.

The construction can be summarized at the schema level as

\[
S_{\mathrm{order}}
\triangleleft
S_{\mathrm{succ}}
\triangleleft
S_{\mathrm{disc}}
\triangleleft
S_{\mathrm{num}}.
\]

Each occurrence of $\triangleleft$ represents a typed structural dependency: the structure produced at one stage supplies an input or prerequisite required by the next schema. The last step also illustrates abstraction, since the resulting object no longer depends on the particular elements used to realize the succession.

If such a number-like schema becomes stable and reusable, it may itself be lifted as a schema concept and become available as an input to further schemas. Thus the construction is not intended as an axiomatization of the natural numbers, but as an example of how progressively higher-level structures can be built and subsequently reused within the Knowledge Space.

\subsection*{Higher-Order Structural Correspondence}

The schemas considered so far construct relations and structures within a context. A further step becomes possible when previously available relations or schema concepts are themselves used to identify a common organization across different configurations. The resulting schema operates not only on individual objects but on relational structures formed from them.

Consider two finite collections of typed objects

\[
X=\lbrace a,b,c,d\rbrace,
\qquad
X'=\lbrace r,p,s,q\rbrace.
\]

The order in which the elements of these sets are displayed has no structural meaning. Suppose that the first configuration contains a binary relation $R$ with

\[
R(a,b),\qquad
R(a,c),\qquad
R(b,d),\qquad
R(c,d),
\]

while the second contains a possibly different binary relation $R'$ with

\[
R'(p,q),\qquad
R'(p,r),\qquad
R'(q,s),\qquad
R'(r,s).
\]

The two configurations have the same relational organization under the correspondence

\[
\varphi:X\to X'
\]

defined by

\[
\varphi(a)=p,\qquad
\varphi(b)=q,\qquad
\varphi(c)=r,\qquad
\varphi(d)=s.
\]

For the participating pairs,

\[
R(x,y)
\Longleftrightarrow
R'(\varphi(x),\varphi(y)).
\]

The correspondence is therefore determined by preservation of relational structure rather than by the names or displayed order of the objects.

This construction can be stated more generally. Let $X$ and $X'$ be finite collections of typed objects, and let $\mathcal R$ and $\mathcal R'$ be collections of relations represented in the corresponding application frames. A structural correspondence consists of an object mapping

\[
\varphi:X\to X'
\]

together with, when the participating relations are not identical, a correspondence

\[
\psi:\mathcal R\to\mathcal R'
\]

between the relevant relations. For each participating relation $R\in\mathcal R$, structural preservation requires

\[
R(x_1,\ldots,x_k)
\Longleftrightarrow
\psi(R)
\bigl(
\varphi(x_1),\ldots,\varphi(x_k)
\bigr)
\]

for the tuples involved in the correspondence.

The mappings $\varphi$ and $\psi$ are subject to the type requirements and
structural prerequisites of the application frame; they are not induced by an
enumeration of the participating objects or relations. The structural
correspondence schema may therefore be represented as

\[
S_{\mathrm{corr}}:
(\text{typed objects},\text{typed relations})
\longrightarrow
\text{structural correspondence}.
\]

The construction becomes higher-order when the participating relations or
structural patterns are outputs of previously available schemas, or when
stabilized schemas have already been lifted as schema concepts. Several such
concepts may jointly support the construction of a new higher-order schema,

\[
(c_{S_1},c_{S_2},\ldots,c_{S_m})
\longrightarrow
S_H.
\]

If the resulting schema becomes stable for reuse, it may itself be lifted,

\[
S_H
\xrightarrow{\operatorname{Lift}}
c_{S_H}.
\]

The correspondence therefore preserves an organization among typed objects
and relations without requiring the participating objects or relations to be
identical across the two configurations.

The constructions above illustrate two forms of schema derivation.
Compositional derivation occurs when the output of one schema supplies an
input or prerequisite of another, whereas higher-order construction may
combine several previously available structures. Table~\ref{tab:schema_dependencies}
summarizes the principal dependencies developed in this section.

\begin{table}[t]
\centering
\small
\caption{Structural prerequisites and resulting structures for the principal
schemas developed in this section. The rows summarize dependencies and do
not represent a single linear sequence of schema construction.
\vspace{1em}}
\label{tab:schema_dependencies}
\begin{tabular}{p{0.20\linewidth} p{0.33\linewidth} p{0.33\linewidth}}
\toprule
Schema & Required structure & Resulting structure \\
\midrule

Equivalence
& Context partition
& Equivalence relation induced by common cell membership \\

Order
& Typed objects or context cells together with an admissible comparison relation
& Ordered relational structure \\

Successor
& Ordered structure
& Minimal progression relation \\

Structural dependence
& Application frame, typed components, admissible instantiations, and local constraints
& Directed dependence of admissible realizations within the application frame \\

General and conditional implication
& Predicates together with a specified domain of validity& Truth-functional implication over the corresponding domain\\

Structural causal precedence
& Structural dependence together with successor structure and cell-wise occurrence predicates& Restricted causal relation combining structural dependence with predecessor structure\\

Discrete succession
& Successor relation and an initial object
& Finite or iterated successor chain \\

Number-like structure
& Finite successor chains and equivalence under preservation of successor structure
& Equivalence classes of finite successions and their induced structural organization \\

Higher-order structural correspondence
& Two typed relational configurations together with compatible objects, relations, or previously derived structures
& Structure-preserving correspondence and a reusable higher-order structural pattern \\

\bottomrule
\end{tabular}
\end{table}

One compositional dependency chain is

\[
S_{\mathrm{order}}
\triangleleft
S_{\mathrm{succ}}
\triangleleft
S_{\mathrm{disc}}
\triangleleft
S_{\mathrm{num}}.
\]

Higher-order structural correspondence instead combines several available
structures and therefore need not have a single predecessor.

The constructions considered so far concern the derivation and application of
schemas within contexts or application frames. We next consider when a
relation can be reused across different contexts and thereby represented as a
stable law.

\section{Relations and Laws}

Relations are structured links defined relative to a context. Let
$\mathcal C$ be a context on a base domain $C\subseteq\Omega$. A
context-relative binary relation has the form

\[
R_{\mathcal C}
\subseteq
\mathcal C\times\mathcal C,
\]

or equivalently

\[
R_{\mathcal C}:
\mathcal C\times\mathcal C
\to
\lbrace\top,\bot\rbrace.
\]

Such a relation is evaluated with respect to the distinctions represented by
$\mathcal C$. It may therefore depend on the granularity and organization of
that context and, by itself, is not yet a law.

A relation defined on the underlying domain may induce a context-relative
relation when it is invariant with respect to the context cells. Let

\[
q_{\mathcal C}:C\to\mathcal C
\]

be the canonical quotient map. Let

\[
R:C\times C
\to
\lbrace\top,\bot\rbrace.
\]

The relation $R$ descends to $\mathcal C$ if it is constant on the fibers of

\[
q_{\mathcal C}\times q_{\mathcal C}.
\]

Explicitly, whenever

\[
q_{\mathcal C}(\omega_1)
=
q_{\mathcal C}(\omega'_1)
\]

and

\[
q_{\mathcal C}(\omega_2)
=
q_{\mathcal C}(\omega'_2),
\]

we require

\[
R(\omega_1,\omega_2)
=
R(\omega'_1,\omega'_2).
\]

Under this condition there exists a unique relation $R_{\mathcal C}$ such
that

\[
R
=
R_{\mathcal C}
\circ
(q_{\mathcal C}\times q_{\mathcal C}).
\]

Representative selection provides an equivalent realization of this
context-relative relation. Let

\[
\rho:\mathcal C\to C
\]

satisfy

\[
q_{\mathcal C}\circ\rho
=
\operatorname{id}_{\mathcal C}.
\]

Then, for every $c_1,c_2\in\mathcal C$,

\[
R_{\mathcal C}(c_1,c_2)
=
R\bigl(\rho(c_1),\rho(c_2)\bigr).
\]

Because $R$ is constant on context cells, this value is independent of the
particular representative selection $\rho$. Representative independence is
therefore a well-definedness condition for a context-relative realization,
rather than a transformation between contexts.

We next consider relations across different contexts. Let $\mathcal C'$ be
a context on $C'$ and $\mathcal C$ a context on $C$. A transformation

\[
p:C'\to C
\]

is compatible with the contexts if it induces a map

\[
\pi:\mathcal C'\to\mathcal C
\]

such that

\[
q_{\mathcal C}\circ p
=
\pi\circ q_{\mathcal C'}.
\]

Thus objects that are indistinguishable in $\mathcal C'$ are mapped to
objects that remain indistinguishable in $\mathcal C$. Only transformations
satisfying this compatibility condition are treated here as admissible
context transformations.

Given a relation $R_{\mathcal C}$ on $\mathcal C$, its pullback along
$\pi$ is the relation on $\mathcal C'$ defined by

\[
(\pi^*R_{\mathcal C})(c'_1,c'_2)
=
R_{\mathcal C}
\bigl(
\pi(c'_1),\pi(c'_2)
\bigr).
\]

For a refinement, $\mathcal C'$ is finer than $\mathcal C$ and $\pi$ is the
canonical map sending each finer cell to the coarser cell containing it.
The pullback therefore gives a canonical realization of a coarser relation
on the finer context.

The converse direction requires an additional condition. A relation
$R_{\mathcal C'}$ on the finer context descends to $\mathcal C$ only if it
is constant on the fibers of

\[
\pi\times\pi.
\]

Thus, whenever

\[
\pi(c'_1)=\pi(d'_1)
\]

and

\[
\pi(c'_2)=\pi(d'_2),
\]

we require

\[
R_{\mathcal C'}(c'_1,c'_2)
=
R_{\mathcal C'}(d'_1,d'_2).
\]

When this condition holds, there exists a unique relation
$R_{\mathcal C}$ satisfying

\[
R_{\mathcal C'}
=
\pi^*R_{\mathcal C}.
\]

Hence refinement naturally supports pullback, whereas coarsening requires
descent. A projection between represented domains is admissible only when
it induces a well-defined map between the corresponding context cells as
specified above.

Let $\mathfrak C$ be a family of compatible contexts together with their
admissible context transformations. A law $L$ over $\mathfrak C$ is a
family of context-relative realizations

\[
L=
\bigl(
R^L_{\mathcal C}
\bigr)_{\mathcal C\in\mathfrak C}
\]

such that, for every admissible map

\[
\pi:\mathcal C'\to\mathcal C,
\]

the realizations satisfy

\[
R^L_{\mathcal C'}
=
\pi^*R^L_{\mathcal C}.
\]

The same compatibility equation has two complementary interpretations.
From a coarser context to a finer one, it specifies pullback. From a finer
context to a coarser one, it can hold only when the finer realization
satisfies the required descent condition.

The compatibility condition is itself stable under composition. Suppose

\[
\mathcal C''
\xrightarrow{\sigma}
\mathcal C'
\xrightarrow{\pi}
\mathcal C
\]

are admissible context transformations and

\[
R^L_{\mathcal C'}
=
\pi^*R^L_{\mathcal C},
\qquad
R^L_{\mathcal C''}
=
\sigma^*R^L_{\mathcal C'}.
\]

Then

\[
R^L_{\mathcal C''}
=
\sigma^*
\bigl(
\pi^*R^L_{\mathcal C}
\bigr)
=
(\pi\circ\sigma)^*
R^L_{\mathcal C}.
\]

Thus the realization of a law does not depend on whether a compatible
change of context is performed directly or through intermediate admissible
contexts.

A law is therefore not identified with any single relation
$R^L_{\mathcal C}$. It is the stable relational structure represented by
the compatible family of its context-relative realizations. The family
$\mathfrak C$ specifies its domain of applicability.

When such a law is sufficiently stable for reuse, it may be lifted into the
Concept Graph as a conceptual object. Its context-relative relations remain
realizations of that law rather than independent lifted relations. A lifted
law with explicitly relational content may be represented as a relation
concept.

\section{Tasks, Contexts, and Schema Applicability}

Before problem solving can take place, a task must be represented in a context in which its unknowns, admissible distinctions, and constraints can be stated. A task is therefore specified relative to a context rather than identified with an algorithmic objective, loss function, or input sentence.

Let $C$ be a finite context. A task is represented by

$$
T=(C,G,U,\Gamma),
$$

where $G$ contains the given elements or facts, $U$ the unknowns to be instantiated, and $\Gamma$ the constraints on admissible instantiations.

The external presentation of a task may be verbal, symbolic, diagrammatic, data-based, or already formal. Here we assume that this presentation has been converted into $T$. The questions considered below are then which distinctions and schemas are available in $C$, and whether the unknowns can be instantiated consistently with $\Gamma$.
 \subsection{Illustrative Examples}

The following examples illustrate the consequences of applying an appropriate
representational transformation once the relevant invariance, dependency, or
structural requirement is available. They do not address how such a
transformation is discovered. In some cases the external task semantics
indicates the relevant invariance without prescribing its representational
realization; in others, it specifies that a relevant structure exists without
identifying it completely. The general claims concerning schema applicability,
structural dependency, and cross-context stability follow from the formal
constructions developed above rather than from these examples.    

\subsubsection*{Example 1: Permutation-Invariant Structure}

\paragraph{External task statement.}
Given a finite sequence of symbols, determine whether the sequence contains at least two identical elements. The order of symbols is irrelevant to the answer.

\paragraph{Task specification.}
Let $C_1$ be the domain of finite sequences

\[
x=(x_1,\ldots,x_n)
\]

over a finite alphabet. Let $\mathcal C_1$ be the discrete context on $C_1$, so that each sequence is initially represented by a distinct context cell. The task

\[
\mathcal{T}_1=(\mathcal C_1,G_1,U_1,\Gamma_1)
\]

is defined by:
\begin{itemize}
    \item 
 
\item \textbf{Context $\mathcal C_1$:} the discrete context on $C_1$, in which every sequence is represented by a distinct context cell and the sequence representation retains positional order and symbol identity.

  \item \textbf{Given elements $G_1$:} a finite set of labeled examples
  \[
  \{(x^{(i)},y^{(i)})\},
  \]
  where $x^{(i)}\in C_1$ and $y^{(i)}\in\{0,1\}$.

  \item \textbf{Unknown $U_1$:} a function
  \[
  f:C_1\to\{0,1\}.
  \]

  \item \textbf{Constraints $\Gamma_1$:} consistency with the given examples and invariance under permutations of sequence positions.
\end{itemize}

In the raw context $\mathcal C_1$, two distinct sequences are represented by distinct singleton cells, even when they differ only by a permutation of positions. Define an equivalence relation on $C_1$ by

\[
x\sim_{\mathrm{perm}}x'
\]

if $x'$ can be obtained from $x$ by a permutation of positions. The corresponding quotient context

\[
\mathcal C_1^{\mathrm{perm}}
=
C_1/{\sim_{\mathrm{perm}}}
\]

groups permutation-equivalent sequences into the same context cell and therefore suppresses order distinctions that are irrelevant to the task.

Let

\[
q_{\mathrm{perm}}:
C_1\to\mathcal C_1^{\mathrm{perm}}
\]

denote the canonical quotient map. Permutation invariance means that $f$ is constant on the cells of $\mathcal C_1^{\mathrm{perm}}$. Hence there exists a unique function

\[
\bar f:
\mathcal C_1^{\mathrm{perm}}
\to
\{0,1\}
\]

such that

\[
f=\bar f\circ q_{\mathrm{perm}}.
\]

\paragraph{Data-based variant.}
The same task may be given through labeled observations such as

\[
G_1=
\lbrace
((a,b,c),0),
((a,b,a),1),
((d,e,f),0)
\rbrace.
\]

These observations constrain $f$ only on the displayed sequences. The
additional condition in $\Gamma_1$ specifies permutation invariance, so after
passing to $\mathcal C_1^{\mathrm{perm}}$ the task is represented by

\[
\bar f:
\mathcal C_1^{\mathrm{perm}}
\to
\lbrace0,1\rbrace,
\]

which depends only on the permutation class. In this example, the relevant
class property is whether some symbol occurs at least twice. 

\subsubsection*{Example 2: Abstraction and Generalization by Removing Irrelevant Coordinates}

\paragraph{External task statement.}
Given a binary vector of fixed length, determine an output bit that depends only on a small, unknown subset of input coordinates. All remaining coordinates are irrelevant to the correct answer.

\paragraph{Task specification.}
The task $\mathcal{T}_2=(\mathcal C_2,G_2,U_2,\Gamma_2)$ is defined by:
\begin{itemize}
  \item \textbf{Context $\mathcal C_2$:} binary vectors
  \[
    x=(x_1,\dots,x_d)\in\{0,1\}^d,
  \]
  with all coordinates treated symmetrically.
  \item \textbf{Given elements $G_2$:} labeled examples $\{(x^{(i)},y^{(i)})\}$.
  \item \textbf{Unknown $U_2$:} a function
  \[
    f:\mathcal C_2\to\{0,1\}.
  \]
  \item \textbf{Constraints $\Gamma_2$:} consistency with the given labels.
\end{itemize}

Within the raw context $\mathcal C_2$, all coordinates are equally represented,
so the representation does not identify which coordinates may be ignored
without affecting the output. The task states that a relevant subset exists
but does not specify its members; discovering that subset is outside the
present example.

Once the relevant coordinates have been identified, projection suppresses
distinctions associated with the remaining coordinates. In the reduced
context, admissible laws can be expressed using only the relevant coordinates
and may therefore have simpler representations. This realizes abstraction by
removing task-irrelevant distinctions and supports generalization when the
same reduced description applies across related task instances.

\subsubsection*{Example 3: Relational Composition under Contextual Constraints}

\paragraph{External task statement.}
Given three objects $A,B,C$ and the statements \emph{``$B$ is next to $A$''} and \emph{``$C$ is next to $B$''}, determine whether \emph{``$A$ is next to $C$''} holds.

The task refers to objects and a binary relation \emph{next to}. No metric distance, global geometry, or transitivity assumptions are specified. The meaning of \emph{next to} is restricted to the task context.

\paragraph{Task specification.}
The task $\mathcal{T}_3=(\mathcal C_3,G_3,U_3,\Gamma_3)$ is specified as follows.

The context consists of finite configurations of objects equipped with a binary relation $\mathrm{Next}(\cdot,\cdot)$. The context distinguishes adjacency relations but does not encode distance, order, or spatial embedding.

The given elements are the relational facts
\[
\mathrm{Next}(B,A),\qquad \mathrm{Next}(C,B).
\]

The unknown is the truth value of the relation
\[
\mathrm{Next}(A,C).
\]

\paragraph{Constraints $\Gamma_3$.}
No constraints beyond consistency with the given relations are imposed. In particular, no assumptions about transitivity or symmetry of $\mathrm{Next}$ are included in the task specification.

Within the given context, adjacency is treated as a local relation. The task
context does not determine whether adjacency is transitive or symmetric, and
therefore does not uniquely determine the value of

\[
\mathrm{Next}(A,C).
\]

The original task is consequently underdetermined. Introducing relation
composition or path structure does not by itself resolve this adjacency
question. Such an extension may instead define a different relation, for
example indirect connectivity or reachability, whose truth conditions are not
the same as those of $\mathrm{Next}$.

 A reformulated task could instead ask whether $A$ and $C$ are connected
through intermediate objects. Such a question may become decidable once an
appropriate path or composition schema is available, but the conclusion then
concerns a new relation rather than the original adjacency predicate.
Consequently, adding constraints may resolve underdetermination of
$\mathrm{Next}(A,C)$, whereas introducing a new relational schema constitutes
a reformulation of the task.

\subsection{Solution preservation under representational reduction.}

Let

\[
\mathcal T'
=
(\mathcal C',G',U',\Gamma')
\]

be a task represented in a context $\mathcal C'$, and let

\[
\pi:\mathcal C'\to\mathcal C
\]

be a surjective admissible context transformation that suppresses distinctions
not required by the task. Let $\sim_{\pi}$ denote the corresponding
equivalence on admissible assignments: two assignments are equivalent when
they differ only in distinctions identified by $\pi$.

Assume that satisfaction of the task constraints is invariant under this
equivalence, so that

\[
u'_1\sim_{\pi}u'_2
\quad\Longrightarrow\quad
\left(
u'_1\models_{\mathcal C'}\Gamma'
\Longleftrightarrow
u'_2\models_{\mathcal C'}\Gamma'
\right).
\]

The constraints therefore induce a well-defined task

\[
\mathcal T
=
(\mathcal C,G,U,\Gamma)
\]

on the quotient representation.

\begin{proposition}

Under the conditions above, the solutions of the reduced task are in
one-to-one correspondence with the equivalence classes of solutions of the
original task:

\[
\operatorname{Sol}(\mathcal T)
\cong
\operatorname{Sol}(\mathcal T')/{\sim_{\pi}}.
\]
\end{proposition}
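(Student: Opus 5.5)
The approach is to make the informal setup precise enough that the statement becomes the standard fact that a predicate invariant under an equivalence relation descends to the quotient. I would first fix what an assignment of the unknowns is in each representation. An assignment $u'$ for $\mathcal T'$ assigns to each unknown in $U'$ a value built from cells of $\mathcal C'$. The transformation $\pi$ acts on such assignments cellwise, giving a map $\pi_*:u'\mapsto \pi\circ u'$ that produces assignments $u$ for the reduced task. By definition of $\sim_\pi$, two assignments are equivalent exactly when they have the same image under $\pi_*$:
\[
u'_1\sim_\pi u'_2 \iff \pi_*u'_1=\pi_*u'_2 .
\]
Surjectivity of $\pi$ gives surjectivity of $\pi_*$ onto the admissible reduced assignments. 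To see this, pick a section $\rho$ of $\pi$, as in the representative-selection construction of the section on Relations and Laws, and lift $u$ to $\rho_*u$.

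Next I would state precisely what the induced task $\mathcal T$ is. It is the condition
\[
u\models_{\mathcal C}\Gamma
\iff
u'\models_{\mathcal C'}\Gamma' \text{ for some (equivalently, every) } u' \text{ with } \pi_*u'=u .
\]
The hypothesis that satisfaction is invariant under $\sim_\pi$ is exactly what makes "some" and "every" agree, so $\Gamma$ is well defined. This mirrors the descent argument for relations, where the requirement was constancy on the fibers of $\pi\times\pi$. The given data $G$ are treated the same way: they are either part of $\Gamma'$ or are themselves invariant.

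Then I would define
\[
\Psi:\operatorname{Sol}(\mathcal T')/{\sim_\pi}\to\operatorname{Sol}(\mathcal T),\qquad [u']\mapsto \pi_*u' ,
\]
and verify four things in order.
\begin{enumerate}
\item $\Psi$ is well defined, since equivalent assignments have equal images.
\item $\Psi$ lands in $\operatorname{Sol}(\mathcal T)$, by the definition of $\Gamma$.
\item $\Psi$ is injective, because $\pi_*u'_1=\pi_*u'_2$ is exactly $u'_1\sim_\pi u'_2$.
\item $\Psi$ is surjective: given $u\in\operatorname{Sol}(\mathcal T)$, the lift $\rho_*u$ maps to $u$, and by the "every" clause it satisfies $\Gamma'$.
\end{enumerate}
A further point is that $\sim_\pi$ restricted to solutions is still an equivalence relation and that solution sets are unions of classes; invariance gives this directly.

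I expect the main obstacle to be definitional rather than computational. The statement leaves vague what "assignments", "differ only in distinctions identified by $\pi$", and the "induced task" mean. The proof works only if $\sim_\pi$ is identified with having the same $\pi_*$-image and $\Gamma$ is defined by descent as above. I would state these identifications explicitly as the interpretation of the hypotheses. If unknowns can have richer types, such as functions into $\mathcal C'$ or relations on it, I would require $\pi$ to act on them by postcomposition or pushforward, and require the lifting along $\rho$ to preserve admissibility.
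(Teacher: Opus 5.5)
Your proposal is correct and takes essentially the same approach as the paper. Invariance of satisfaction under $\sim_{\pi}$ makes $\Gamma'$-satisfaction a property of $\sim_{\pi}$-classes, so each solution class of $\mathcal T'$ determines exactly one solution of the quotient task, and every quotient solution arises this way; you simply make explicit what the paper's short proof leaves implicit (the map $[u']\mapsto\pi_*u'$, the identification of $\sim_{\pi}$ with equality of $\pi_*$-images, the descent definition of $\Gamma$, and surjectivity via a section $\rho$ of $\pi$).
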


\textit{Proof.}
Constraint invariance implies that either every assignment in a
$\sim_{\pi}$-class satisfies $\Gamma'$ or none does. Satisfaction therefore
depends only on the equivalence class. Each solution class of
$\mathcal T'$ consequently determines exactly one solution of the quotient
task $\mathcal T$, and every solution of $\mathcal T$ is obtained from such
a class.
\qed

Two consequences are immediate. First, a task with no admissible solution
cannot acquire one merely by suppressing distinctions in a
solution-preserving representation. Second, several solutions that differ
only in distinctions suppressed by $\pi$ become a single solution in the
reduced context, whereas solutions belonging to different
$\sim_{\pi}$-classes remain distinct. Representational reduction can
therefore remove apparent underdetermination caused solely by irrelevant
distinctions without changing the task-relevant solution structure.

\section{Ill-Posedness of Tasks}

The task representation makes it possible to distinguish different forms of
ill-posedness at the level of task formulation. Internal inconsistency and
underdetermination are properties of the formalized task, whereas contextual
insufficiency additionally concerns the adequacy of that representation
relative to the external task meaning.

\textbf{Type I: Internal inconsistency.}
A task is internally inconsistent if no admissible instantiation of the unknowns satisfies the constraints in the given context. Formally, there is no assignment $u\in U$ such that

$$
u\models_C\Gamma.
$$

Inconsistency may arise from contradictory constraints, incompatible givens, or a mismatch between the assumed schema and the available context.

\textbf{Type II: Underdetermination.}
A task is underdetermined if several mutually incompatible instantiations of the unknowns satisfy the constraints in the same context. Thus there exist distinct admissible assignments

$$
u_1,u_2\in U
$$

such that

$$
u_1\models_C\Gamma,
\qquad
u_2\models_C\Gamma,
$$

while $u_1$ and $u_2$ are not equivalent with respect to the distinctions relevant to the task. Resolving underdetermination requires additional constraints, additional data, a more specific context, or an evaluation criterion.

\textbf{Type III: Contextual insufficiency.}
A task representation is contextually insufficient when the external task
semantics requires distinctions, relations, schemas, or laws that cannot be
expressed in the current context $\mathcal C$. In this case, the external task
may be meaningful, but its current formal representation

\[
\mathcal T=(\mathcal C,G,U,\Gamma)
\]

does not contain the representational structure required to formulate the
intended question or its admissible solution adequately.

Examples include tasks that require metric distance when only adjacency is
represented, transitivity when only a local relation is available, or temporal
order when no successor structure is represented.

Unlike inconsistency and underdetermination, contextual insufficiency is not
diagnosed from the formal task tuple alone. It is identified relative to the
external task meaning from which the formal representation was constructed.
A more explicit treatment in terms of richer reference contexts and
representational projections can be introduced in later work.

Resolving contextual insufficiency requires a change of representation, for
example by refinement, coarsening, projection, extension, or reformulation,
depending on the missing structure.

Granularity mismatch, scope ambiguity, and schema mismatch can be treated as
refinements of the three cases above. Internal inconsistency and
underdetermination can be diagnosed from the formal task representation,
whereas contextual insufficiency additionally requires comparison with the
external task semantics. Once both the task representation and its relation
to the external task meaning are explicit, these mismatches can be classified
as inconsistency, underdetermination, or contextual insufficiency.

The classification identifies whether failure lies in the constraints $\Gamma$, in the available unknowns $U$, in schema applicability, or in the expressive capacity of the context $C$. 

\subsection*{Objects Lifted into the Concept Graph}

The lifting criterion is stability across contexts. Concepts, schema concepts,
laws, and recurring diagnostic structures may be represented as nodes in the
Concept Graph when they remain stable across an appropriate family of
contexts. Context-specific realizations, including particular relations,
equivalences, and concept instantiations, remain local and are linked to the
corresponding stable objects rather than lifted independently. Families of
compatible contexts may themselves provide reusable domains of applicability
for laws and schemas. Procedures and algorithmic primitives are represented
separately in the Procedure Graph and linked to Concept Graph objects through
typing and applicability relations.

\section{Discussion}

The results above place task formulation and schema applicability at the same representational level. A context fixes which distinctions are available, while the task specification determines which of these distinctions and relations are required by the givens, unknowns, and constraints. Problem solving therefore depends not only on the available procedures, but also on what can be expressed in the current context.

This distinction clarifies several different sources of difficulty. Inconsistency arises when no admissible instantiation satisfies the constraints, underdetermination when several non-equivalent instantiations remain admissible, and contextual insufficiency when the distinctions or relations required by the intended solution are absent from the context. The examples exhibit these cases in elementary form: quotienting removes irrelevant order information, projection suppresses irrelevant coordinates, and insufficient relational structure leaves a query unresolved. A change of representation can therefore alter the admissible solution space without changing the task data themselves.

The distinction between relations and laws extends the same idea across contexts. A relation may depend on the granularity and organization of a particular context. A law has compatible realizations across a family of contexts under admissible transformations such as refinement, coarsening, projection, or representative selection. This stability allows a law to be represented as a reusable object in the Concept Graph, while its context-relative realizations remain tied to particular tasks or contexts.

The schema hierarchy, task--schema interface, and distinction between relations and laws together define a structural layer for problem solving. Within this layer, quotienting, projection, context transformation, and schema applicability can be analyzed independently of the particular mechanism used to discover or select them. This separation also provides a basis for later solver architectures in which proposal generation, verification, and persistent knowledge can be implemented by different components.
    
\section{Conclusion}

This paper developed a hierarchy of cognitive schemas for knowledge-based problem solving. Equivalence, order, successor, dependence, implication, discrete succession, and number-like structure are derived from contexts, relations, representative objects, and admissible transformations rather than introduced as independent primitives.

Task specification connects these schemas to problem formulation. A task context determines which distinctions and relations are available for instantiating the unknowns under the stated constraints. This also separates three forms of ill-posedness: inconsistency, underdetermination, and contextual insufficiency. In the last case, the required distinctions or relations are absent from the current context, so the representation must be changed before a solution can be expressed.

Relations and laws are separated in the same way across contexts. Relations may be specific to a particular context, whereas laws have compatible realizations across a family of contexts under admissible transformations. Such stable structures can be reused and represented in the Concept Graph.

A broader research direction is to investigate how this explicit structural layer can interact with language-model architectures. Two complementary possibilities are of particular interest: building problem-solving capabilities from an initially untrained or minimally pretrained model whose persistent knowledge is organized through the explicit Knowledge Space, and coupling the framework to a pretrained LLM that provides rich natural-language interpretation and candidate proposals while explicit representation, verification, and knowledge stabilization remain outside the model. In both cases, the objective is to separate proposal generation from verified persistent knowledge and controlled structural learning. These directions provide a route for extending the present representational framework toward solver architectures that combine flexible proposal generation with explicit verification and persistent structural learning.

\section{Declarations}

\subsection{Availability of Data and Material}

Data availability statement: not applicable.

\subsection{Competing Interests}

The authors declare that they have no competing interests.

\subsection{Funding}

No funding was received for this work.

\subsection{Authors' Contributions}

The authors contributed equally to all aspects of the work.

\subsection{Acknowledgements}

The authors have no acknowledgements to declare.

\bibliographystyle{unsrt}

\bibliography{ai3_P2}
\end{document}